\newtheorem{theorem}{Theorem}  
\DeclareMathOperator*{\argmin}{arg\,min}
\DeclareMathOperator*{\argminC}{\arg\min}   
\useunder{\uline}{\ul}{}
\title{ACE: Adaptive Constraint-aware Early Stopping in Hyperparameter Optimization}
\author{%
  Yi-Wei Chen \\
  Texas A\&M University\\
  \texttt{yiwei\_chen@tamu.edu} \\
   \And
   Chi Wang \\
   Microsoft Corporation \\
   \texttt{wang.chi@microsoft.com} \\
   \AND
   Amin Saied \\
   Microsoft Corporation \\
   \texttt{amin.saied@microsoft.com} \\
   \And
   Rui Zhuang \\
  Microsoft Corporation \\
   \texttt{ruzhuang@microsoft.com} \\
}
\begin{document}

\maketitle

\begin{abstract}
Deploying machine learning models requires high model quality and needs to comply with application constraints. 
That motivates hyperparameter optimization (HPO) to tune model configurations under deployment constraints. The constraints often require additional computation cost to evaluate, and training ineligible configurations can waste a large amount to tuning cost.
In this work,
we propose an Adaptive Constraint-aware Early stopping (ACE) method to incorporate constraint evaluation into trial pruning during HPO.
To minimize the overall optimization cost,
ACE estimates the cost-effective constraint evaluation interval 
based on a theoretical analysis of the expected evaluation cost.
Meanwhile, 
we propose a stratum early stopping criterion in ACE, which considers both optimization and constraint metrics in pruning and does not require regularization hyperparameters.
Our experiments demonstrate 
superior performance of ACE in hyperparameter tuning of classification tasks under fairness or under robustness constraints.
\end{abstract}

\section{Introduction}

When machine learning (ML) is deployed in real-world applications, 
practitioners desire to tune hyperparameters of a model to maximize its utility.  
The model is often required not only to optimize for ML objectives (e.g., accuracy, l2 loss, or F1 scores), but also to meet the deployment constraints, such as latency, storage, fairness, robustness, and explainablility.
For example, when fairness is concerned,
it is required to treat different demographics fairly~\citep{perrone2021fair}.
The constraints increase the challenge of hyperparameter optimization (HPO).
If constraints are difficult to meet, 
we may spend high cost on ineligible trials, which violate the constraints. 
If constraints are expensive to compute, the step of constraint checking adds non-trivial overhead to HPO. 
How to early stop ineligible trials and how frequent to evaluate constraints are important factors for constrained HPO. 
Moreover, constraints vary in different problems in terms of constraint checking
cost and difficulty to satisfy. 
A good solution needs to adapt to unknown constraint characteristics automatically.

We propose Adaptive Constraint-aware Early stopping (ACE), which makes use of constraint evaluation to terminate inferior trials. It decides the frequency to check constraints and
the moment to stop trials 
to improve the efficiency of constrained HPO.
We model the expected trial cost in terms of the constraint evaluation interval
and provide theoretical analysis for the optimal constraint evaluation interval in the cost model.
Based on the theoretical results,
ACE suggests cost-effective constraint evaluation interval for each trial. 
Meanwhile, we propose a simple yet effective \emph{stratum truncation} policy to prune unqualified trials. 
According to constraint evaluation results, 
the approach categorizes trials into groups: no-constraint-evaluation, satisfying-constraints, and violating-constraints.
It terminates inferior trials of each group, 
which have either a relatively large extent of constraint violation or bad optimization objective. 
Unlike regularization
methods~\citep{xu2017gv}, 
our approach does not introduce additional penalty hyperparameters. It automatically balances the goal of optimization and constraint satisfaction.

ACE is the first constraint-aware early stopping algorithm for HPO. 
Our experiments demonstrate that
ACE obtains superior performance to constraint-agnostic early stopping baselines~\citep{li2020system}, on UCI credit card dataset~\citep{yeh2009comparisons} with a fairness constraint~\citep{bird2020fairlearn}
and on GLUE SST2~\citep{wang2018glue} with 
a robustness constraint~\citep{ribeiro-etal-2020-beyond}.



\begin{figure}
    \centering
    \includegraphics[width=0.7\linewidth]{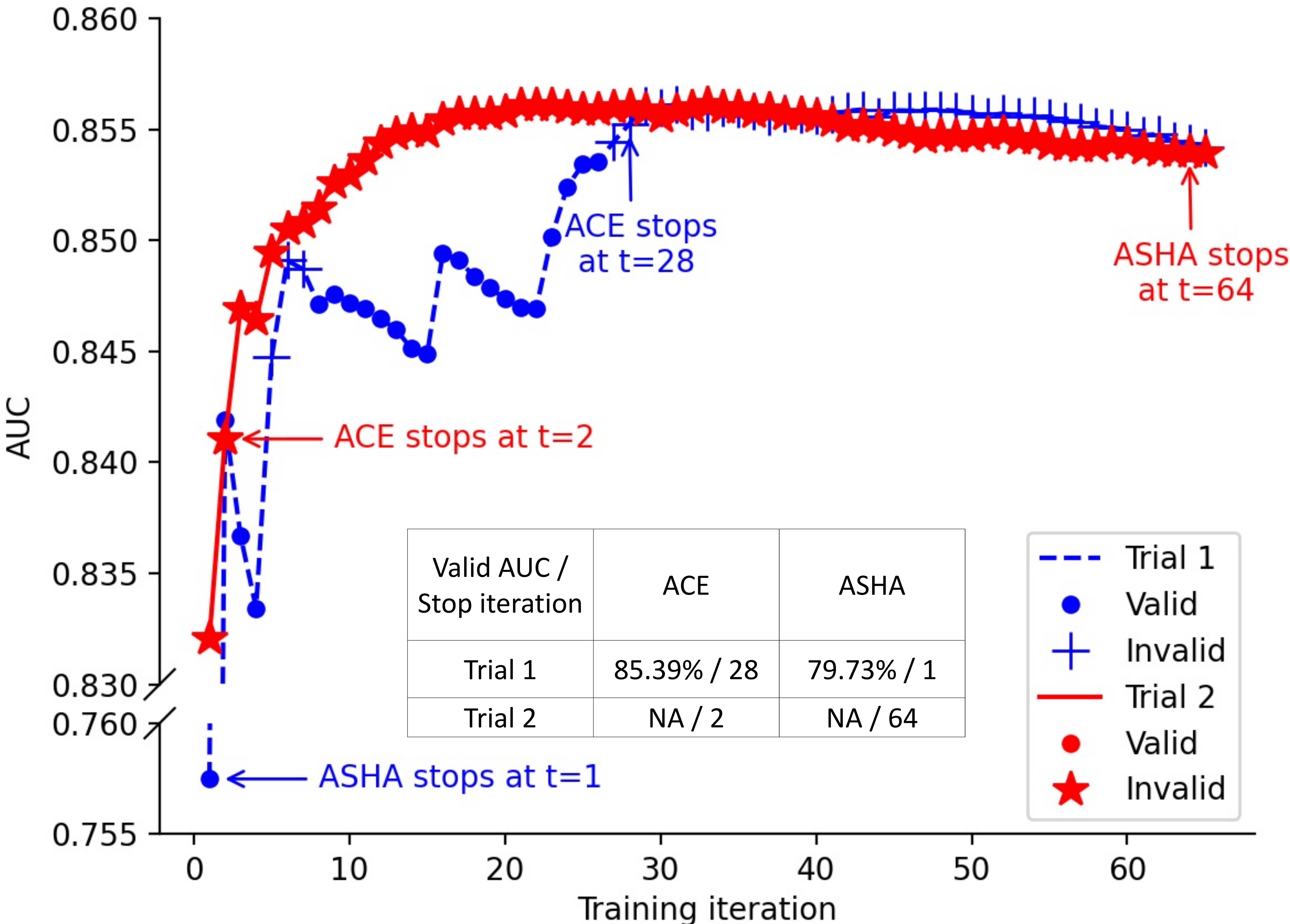}
    \caption{
    How ACE (our method) and ASHA~\citep{li2020system} terminate two trials from random search.
    \textquotesingle $\bullet$\textquotesingle\ signifies a valid checkpoint (training iteration), satisfying the constraint,
    while \textquotesingle $+$\textquotesingle\ represents an invalid checkpoint, violating the constraint.
    ACE is a constraint-aware early stopping approach,
    while ASHA is a constraint-agnostic method.
    For Trial 1,
    ACE tolerates small extent of constraint violation in checkpoints 5-7 and stops it at the 28th checkpoint, close to the optimal stop point (26) for this trial.
    ASHA wrongly stops this promising trial in the beginning.
    For Trial 2, all the checkpoints are invalid.
    ACE can stop this infeasible trial early due to large extent of constraint violation,
    while ASHA keeps training it until $t=64$.
    These are only two example trials to illustrate the potential advantage of using constraint information in early stopping and explain the overall empirical effectiveness of ACE
    even though the stop point is not necessarily optimal for every trial. 
    }
    \label{fig:when_to_prune}
\end{figure}

\section{Related work}
Constrained hyperparameter optimization (HPO) 
uses constraint information to suggest the next candidate trials.
In Bayesian optimization~\citep{gelbart2014bayesian, gardner2014bayesian, bernardo2011optimization, perrone2021fair}, 
they use additional Gaussian process to model the probability of the constraint violation.
The utility function of expected improvement (EI) is multiplied by the constraint probability to suggest the next promising and feasible trials.
The constraint violation can also function as a regularization term to penalize the optimization metric~\citep{xu2017gv}.
The penalized metrics lead the optimization algorithm toward feasible regions with less violation.
Since their penalty hyperparameter is sensitive to the scale of constraint violation,
penalized methods require extra effort to tune the penalty.
These methods check whether a trial meets or violates constraints once after a trial completes training.
When constraints are difficult to satisfy, we may spend unnecessary training cost on ineligible trials.
Our method can examine constraints during training and stop trials without extra penalty hyperparameters.

Early stopping methods are widely applied to unconstrained HPO to reduce training time of inferior trials.
Dynamic budget allocation~\citep{li2017hyperband, li2020system, jamieson2016non} can run $N$ trials for a small budget (e.g., training iterations).
They iteratively select the best \(1/c\) portion and increase their budget by \(c\) times.
Median stopping policy~\citep{golovin2017google} stops a trial if the trial's best optimization metric is worse than the median value of running average of all completed trials.
Bandit stopping policy~\citep{rasley2017hyperdrive} compares a trial's best optimization metric to
an allowable slack ratio of the global best optimization metric.
Performance curve stopping policy~\citep{swersky2014freeze, golovin2017google, domhan2015speeding} use training curves of completed trials to train a regression model,
which predict an optimization metric for an incoming trial.
If the probability of exceeding the optimal optimization metric is low,
the trial is stopped.
These early stopping methods prune trials only based on the optimization metric.
In the existing tuning frameworks~\citep{liaw2018tune, feurer-neurips15a} and business services~\citep{das2020amazon, golovin2017google},
they provide classical early stopping approaches by comparing optimization metrics.
No constraint information is used in their early stopping policy.
We are motivated to augment these services to handle constraints required by practitioners.

\section{Adaptive constraint-aware early stopping (ACE)}
\label{sec:ace}
We present the target problem of ACE in Section~\ref{subsec:problem},
followed by our model of the trial cost with respect to the constraint evaluation interval in Section~\ref{subsec:cost}.
The cost model 
is used to suggest cost-effective constraint evaluation interval for each trial (Section~\ref{subsec:interval}).
To use both optimization and constraint metrics for pruning unqualified trials,
we propose a \emph{stratum truncation} policy  (Section~\ref{subsec:stratum}). Finally, we further optimize our method to reduce cost in constraint evaluations. (Section~\ref{subsec:skip}).

\subsection{Problem statement}
\label{subsec:problem}
Given 
a constraint metric $g$ and the constraint threshold $\tau$,
and the optimization metric $\ell$, 
we target at finding the best feasible hyperparameter configuration $x \in \mathcal{X}$ with minimal computation cost. The best feasible hyperparameter configuration is defined as:
\begin{gather*}
    \argmin_{x \in \mathcal{X}} \ell(x) 
    \text{ s.t. } g(x) \le \tau,
\end{gather*}
The optimization metric could be
validation loss or other metrics to minimize.
We assume the values of the optimization metrics and constraint metrics 
change as training iterations increase (otherwise we only need to evaluate them once per trial).
We also assume the constraint metric evaluation incurs non-negligible cost, and the constraint metrics are selectively evaluated at some training iterations (i.e., checkpoints). 
\(\ell(x)\) and \(g(x)\) correspond to the metrics at the best checkpoint among all the training iterations where these metrics are evaluated. 
We focus on designing an effective early stopping policy from 
two aspects: when to evaluate constraint metrics, and when to terminate a trial.
Our early stopping policy is not tied to a particular hyperparameter search algorithm.

\subsection{Expected trial cost}
\label{subsec:cost}
For a given trial, let $C_1$ be the \emph{constraint cost}, the evaluation cost of 
the constraint metrics at one training iteration.
Let $C_2$ be the \emph{primary cost}, the training cost
plus the evaluation cost of the optimization metric for one training iteration.
We aim to reduce the total cost 
by choosing an appropriate \emph{constraint evaluation interval}, i.e., the frequency of computing the constraint metrics. When the constraint evaluation interval is \(\beta\), the constraint metrics are computed every \(\beta\) training iterations. In other words, the constraints will be evaluated at iterations
$\beta, 2\beta, \ldots, (z-1)\beta$, or $z\beta$,
where $z$ is the smallest integer such that $(z-1)\beta < T \leq z\beta$, and
$T$ is the maximal iteration of a trial.
We further assume that $p$ is the stop probability to prune a trial.
For instance,
if we stop a fixed percentage of inferior trials at each checkpoint,
the stop percentage can function as the stop probability.
Then, we can formulate the expected trial cost by:
\begin{equation}
\label{eq:e_cost}
\begin{aligned}
    \mathbb{E}[C]=(1-p)^z [C_1 z+C_2 T]+ \sum_{k=1}^{z} (1-p)^{k-1}  p[C_1 k +C_2 k\beta].
\end{aligned}
\end{equation}
Its first part models the cost if a trial does not stop early,
while the second part models the cost of terminating a trial after $k$ times of constraint evaluations, for \(k=1,2,\ldots,z\).
There are $k$ constraint evaluations in $k\beta$ training iterations.
Eq.~\eqref{eq:e_cost} is general to any early stopping policy with linear constraint checking intervals.

We define cost ratio as 
$r = C_1  / C_2 $.
Then, we can simplify Eq.~\eqref{eq:e_cost} into $\mathbb{E}[C] = C_2 (r+\beta)\frac{1-(1-p)^{z}}{p}$. 
The derivation is shown in Appendix~\ref{appendix:derivation}.
By definition, \(z \ge \frac{T}{\beta}\).
If \(z>\frac{T}{\beta}\), 
then we can set \(\beta'=\frac{T}{z}<\beta\) and decrease \(\mathbb{E}[C]\).
Thus, an optimal \(\beta\) in \([1,T]\) should make \(z=\frac{T}{\beta}\).
We only need to minimize:
\begin{equation}
\label{eq:derived_cost}
    \mathbb{E}[C
    |\beta
    ] = C_2 (r+\beta)\frac{1-(1-p)^{\frac{T}{\beta}}}{p}.
\end{equation}

\begin{figure}
  \begin{subfigure}[t]{0.5\linewidth}
    \centering
    \begin{minipage}[b]{1.0\linewidth}
        \includegraphics[width=\linewidth]{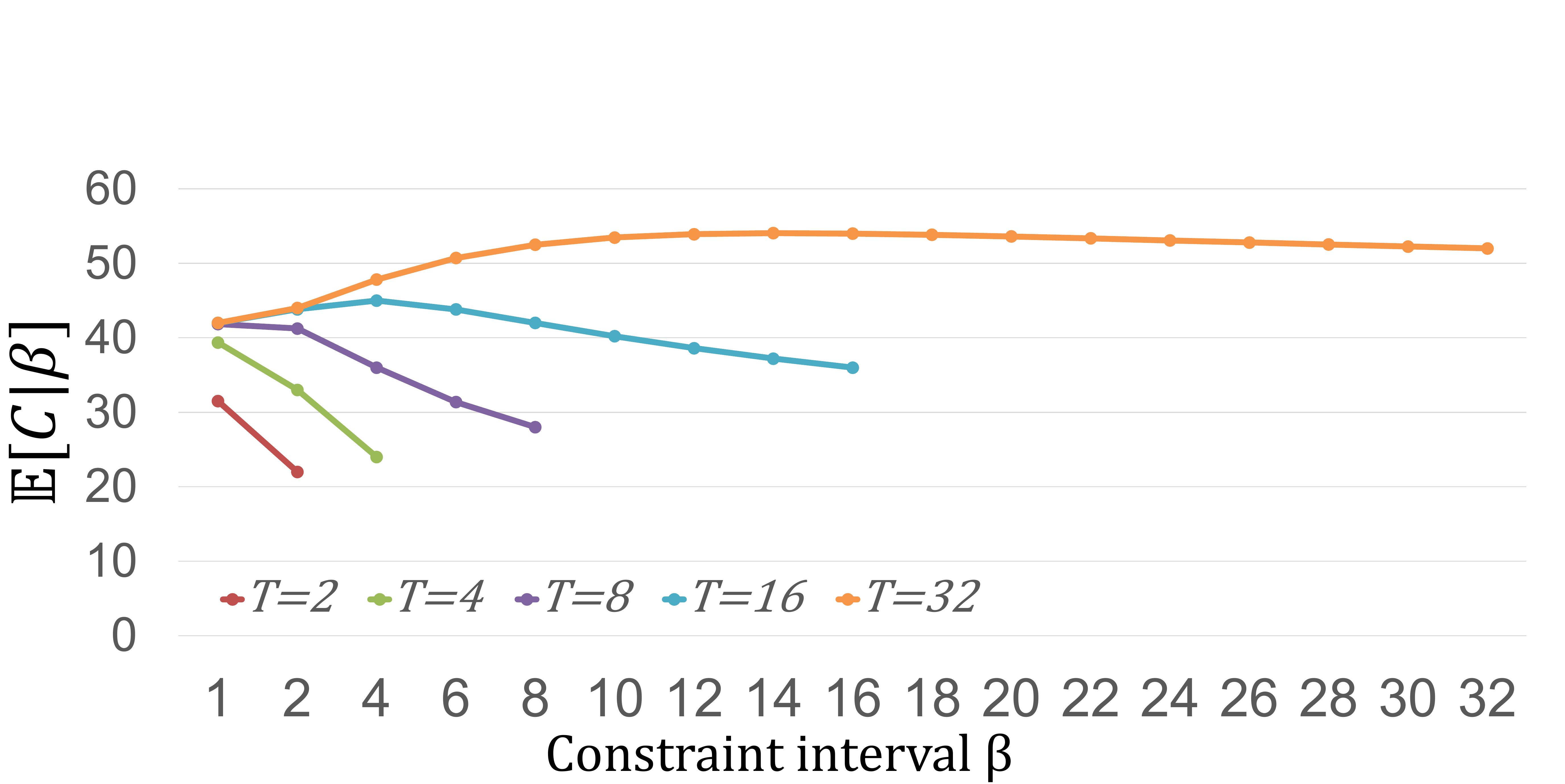}
    \end{minipage}
    \caption{$\mathbb{E}[C|\beta]$ given $r=20$ and $p=0.5$.}
    \label{fig:cost_curve_change_t}
  \end{subfigure}
  \begin{subfigure}[t]{0.5\linewidth}
    \centering
    \begin{minipage}[b]{1.0\linewidth}
        \includegraphics[width=\linewidth ]{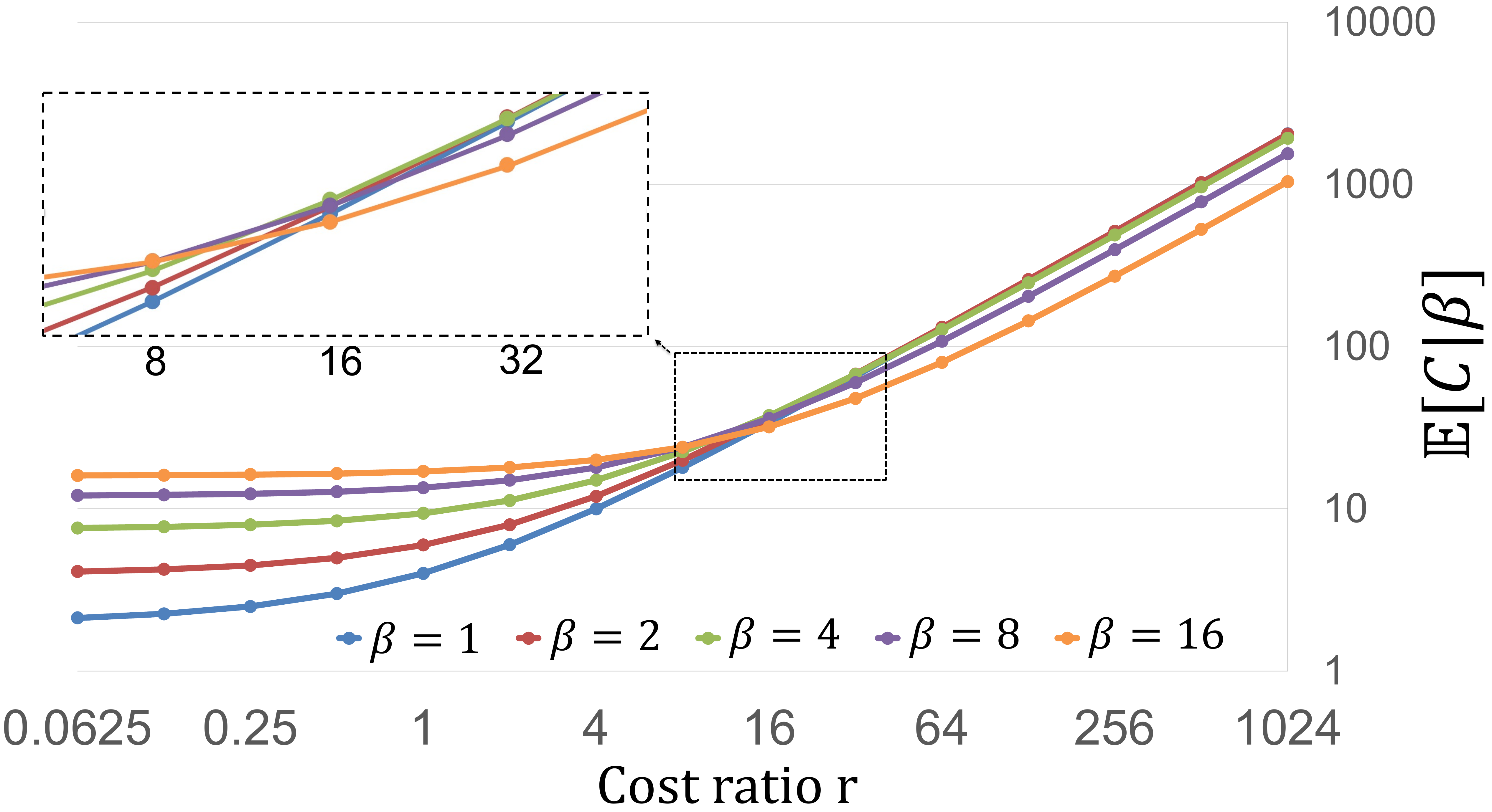}
    \end{minipage}
    \caption{$\mathbb{E}[C|\beta]$ given $p=0.5$ and $T=16$.}
    \label{fig:cost_curve_change_p}
  \end{subfigure}
  \caption{Examples of the expected trial cost in a stop probability $p=0.5$. Left:
    given the cost ratio $r=20$, 
    how the cost changes for various max iterations $T$. Right:
    given $T=16$,
    how the cost changes for different cost ratio $r$.
    }
  \label{example_figure}
\end{figure}

\subsection{Constraint evaluation interval}
\label{subsec:interval}
Given $p$, $T$, and $r$,
we want to obtain the optimal constraint evaluation interval $\beta^*$ to minimize Eq.~\eqref{eq:derived_cost}.
We analyze the equation by fixing the stop probability $p=0.5$.
First of all, 
we observe how max iteration $T$ changes $\mathbb{E}[C|\beta]$ in Figure~\ref{fig:cost_curve_change_t},
when the cost ratio $r$ is 20.
It points out that $\beta^*$ is either 1 or $T$ for different trials.
Then,
we set $T=16$ to observe how $r$ changes $\mathbb{E}[C|\beta]$ in Figure~\ref{fig:cost_curve_change_p}.
It also demonstrates that
$\beta^*$ appears in $\{1, T\}$.
The left part of the figure ($r=0.0625$ to $8$)
favors $\beta=1$,
while 
its right part ($r=16$ to 1,024)
prefers $\beta=T$.
The two cases both indicate
that the optimal $\beta^*$ is situated in $\{1, T\}$,
the two ends of $\beta$'s range.
We can prove Theorem~\ref{thrm:optimal}
for general cases:
\begin{theorem}
\label{thrm:optimal}
$\forall p, r$, and $T$,
the optimal $\beta^*$ for Eq.~\eqref{eq:derived_cost}
is 1 or \(T\), i.e., $\argminC_{\beta \in [1, T]} \mathbb{E}[C
|\beta
]\in \{1, T\}$.
If $\ r<\frac{pT+(1-p)^{T}-1}{1-p-(1-p)^{T}}, 
\beta^* = 1.$
If \(\ r>\frac{pT+(1-p)^{T}-1}{1-p-(1-p)^{T}}\), 
\(\beta^* = T\).
If \(\ r=\frac{pT+(1-p)^{T}-1}{1-p-(1-p)^{T}}\), 
\(\beta^* = \{1, T\}\).
\end{theorem}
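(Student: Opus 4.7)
My plan is to reparametrize, show the objective is unimodal with an interior maximum (not minimum), and then read off the threshold by comparing the two endpoints. Let $q=1-p$ and $a=-\ln q>0$, and substitute $k=T/\beta$, so that $k\in[1,T]$ and
\[
\mathbb{E}[C\mid\beta]\;=\;\frac{C_2}{p}\,(r+\beta)\bigl(1-q^{T/\beta}\bigr)\;=\;\frac{C_2}{p}\,G(k),\qquad G(k):=r\,(1-q^{k})+\frac{T\,(1-q^{k})}{k}.
\]
Since the prefactor is positive, it suffices to show $\min_{k\in[1,T]}G(k)$ is attained at $k=1$ or $k=T$ (which correspond to $\beta=T$ and $\beta=1$, respectively).

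Next, I would show $G$ has at most one critical point and that any such point is a local maximum. Writing $G'(k)$ directly and setting it to zero, the equation rearranges (using $w=ak$) to
\[
\frac{r}{aT}\;=\;\psi(w),\qquad \psi(w):=\frac{e^{w}-1-w}{w^{2}}.
\]
The main obstacle is establishing that $\psi$ is strictly increasing on $(0,\infty)$, which I plan to do by a two-step auxiliary-function argument: the sign of $\psi'$ is that of $N(w):=(w-2)e^{w}+w+2$, and one checks $N(0)=0$, $N'(0)=0$, and $N''(w)=we^{w}>0$, so $N(w)>0$ for $w>0$. Strict monotonicity of $\psi$ then forces $G'$ to vanish at most once on $(0,\infty)$. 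Combined with the boundary behavior $G(k)\to aT$ as $k\to 0^{+}$ and $G(k)\to r$ as $k\to\infty$, together with an inspection of the sign of $G'$ near zero, the unique critical point (when it exists in the interior of $[1,T]$) must be a maximum; if instead the critical point lies outside $[1,T]$, then $G$ is monotone on $[1,T]$. Either way $\min_{[1,T]}G$ is attained at an endpoint, proving $\beta^{*}\in\{1,T\}$.

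Finally, I would pin down which endpoint by direct evaluation:
\[
G(1)=p(r+T)\;\;(\beta=T),\qquad G(T)=(r+1)(1-q^{T})\;\;(\beta=1),
\]
and solve $G(1)=G(T)$ for $r$. Collecting terms and noting that $1-p-q^{T}>0$ for $T\ge 2$, the equality reduces to $r(1-p-q^{T})=pT+q^{T}-1$, yielding the threshold $r=\frac{pT+(1-p)^{T}-1}{1-p-(1-p)^{T}}$. The three cases of the theorem then follow immediately from whether $G(T)-G(1)$ is negative, positive, or zero, which in turn is governed by the comparison between $r$ and this threshold.
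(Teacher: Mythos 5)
Your proposal is correct and reaches the theorem by the same high-level strategy as the paper's proof (show the objective has at most one interior critical point, which can only be a maximum, so the minimizer lies at an endpoint; then compare the two endpoint values to extract the threshold), but the execution of the key monotonicity step is genuinely different. The paper works directly in $\beta$ with $s=(1-p)^{-T}$, defines $g(\beta)$ implicitly from $f'(\beta)=0$, and proves $g'(\beta)<0$ through a nested chain of auxiliary derivatives taken with respect to $s$; your substitution $k=T/\beta$, $w=ak$ instead reduces everything to the explicit function $\psi(w)=\frac{e^{w}-1-w}{w^{2}}$, whose strict monotonicity follows from the clean Taylor-style argument $N(0)=N'(0)=0$, $N''(w)=we^{w}>0$ with $N(w)=(w-2)e^{w}+w+2$. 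This buys a more transparent and self-contained core lemma, at the cost of having to track the change of variables (which you handle correctly: $k=1\leftrightarrow\beta=T$, $k=T\leftrightarrow\beta=1$, and your endpoint values $G(1)=p(r+T)$, $G(T)=(r+1)(1-q^{T})$ reproduce the paper's threshold since $1-p-q^{T}=q-q^{T}>0$ for $T>1$). One spot to tighten: arguing that the unique critical point is a maximum via boundary limits and the sign of $G'$ near $k=0$ is not airtight on its own (e.g.\ $G'(0^{+})$ has the sign of $\frac{r}{aT}-\frac12$ and can be negative, and the limits alone do not exclude a monotone or min-type configuration); the clean fix is already contained in your rearrangement, namely the identity $e^{ak}G'(k)=Ta^{2}\bigl(\frac{r}{aT}-\psi(ak)\bigr)$, so $\operatorname{sign}G'(k)=\operatorname{sign}\bigl(\frac{r}{aT}-\psi(ak)\bigr)$ is strictly decreasing in $k$, hence $G'$ can only change sign from positive to negative and every interior critical point is a maximum, exactly paralleling the paper's increasing-then-decreasing conclusion.
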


\begin{proof}
We define $s := (1-p)^{-T} > 1$ and $T>1$.
\(f(\beta) = (r+\beta)\frac{1-s^{-\frac{1}{\beta}}}{p}\) and \(\mathbb{E}[C
|\beta
]\) have the same minimizers for \(\beta\in [1,T]\).
We take the derivative of \(f(\beta)\):
\begin{align*}
    f'(\beta) = \dfrac{1-s^{-\frac{1}{\beta}}}{p}-\dfrac{\ln\left(s\right)\left({\beta}+r\right)}{ps^\frac{1}{{\beta}}{\beta}^2}
\end{align*}
$\because s > 1$, we notice that \(f'(\beta)\) decreases as \(r\) increases, and:
\begin{align}
\label{eq:root}
    f'(\beta) = 0 \Leftrightarrow r=-\dfrac{{\beta}\ln\left(s\right)-{\beta}^2s^\frac{1}{{\beta}}+{\beta}^2}{\ln\left(s\right)} 
    = g(\beta)
\end{align}
Then we take the derivative of \(g(\beta)\), $h(s)$, and $i(s)$:
\begin{align*}
     g'(\beta) &= \dfrac{\left(2s^\frac{1}{{\beta}}-2\right){\beta}-s^\frac{1}{{\beta}}\ln\left(s\right)-\ln\left(s\right)}{\ln\left(s\right)} 
    \triangleq \frac{h(s)}{\ln{s}} \\
    h'(s) &= -\dfrac{s^\frac{1}{{\beta}}\ln\left(s\right)-{\beta}s^\frac{1}{{\beta}}+{\beta}}{{\beta}s} 
    \triangleq - \frac{i(s)}{\beta s} \\
    i'(s) &= \dfrac{s^{\frac{1}{{\beta}}-1}\ln\left(s\right)}{{\beta}} \ge 0, s\ge 1
\end{align*}
$\because i'(s) \ge 0$, \(i(s)\ge i(1)=0, s\ge 1\). 
That means \(h'(s)\le 0, s\ge 1\). So \(h(s)< h(1)=0, s>1\). Hence, \(g'(\beta)<0, s>1\).
Now, assume \(f'(\beta_0)=0\) for \(\beta_0\in [1,T]\). We know from Eq.~\eqref{eq:root} that \(r=g(\beta_0)\). For any \(1\le \beta_1 < \beta_0\), we know \(g(\beta_1)>g(\beta_0)=r\) according to \(g'(\beta)<0\). 
Because \(f'(\beta)\) decreases when \(r\) increases, we have:
\begin{align*}
    f'(\beta_1) > \dfrac{1-s^{-\frac{1}{\beta_1}}}{p}-\dfrac{\ln\left(s\right)\left({\beta_1}+g(\beta_1)\right)}{ps^\frac{1}{{\beta_1}}{\beta_1}^2} = 0
\end{align*}
Likewise, for any \(\beta< \beta_2 \le T\), $f'(\beta_2) < 0$.
These mean that if \(f'(\beta)\) has a root \(\beta_0\) in \([1,T]\), \(f(\beta)\) is monotonically increasing in \([1,\beta_0]\), and monotonically decreasing in \([\beta_0, T]\). If \(f'(\beta)\) does not have a root in \([1,T]\), then it is either monotonically increasing or monotonically decreasing in \([1, T]\) because \(f'(\beta)\) is continuous in \([1,T]\). Therefore, in either case, the minimizer of \(f(\beta)\) is either 1 or \(T\).

Finally, we compare \(f(1)\) and \(f(T)\).
\begin{align*}
    & f(1) < f(T) \Leftrightarrow (r+1)(1-s^{-1})<(r+T)(1-s^{-1/T})\\
    \Leftrightarrow & (r+1)(1-(1-p)^{T})<(r+T)p \\
    \Leftrightarrow & r<\frac{pT+(1-p)^{T}-1}{1-p-(1-p)^{T}}
\end{align*}
\end{proof}

Theorem \ref{thrm:optimal} states that
the best constraint interval for Eq.~\eqref{eq:derived_cost} is either $1$ or $T$.
It also formulates the relationship between $r$, $p$, $T$, and $\beta$.
By substituting $p=0.5$ and $T=16$,
we get \(\ \frac{pT+(1-p)^{T}-1}{1-p-(1-p)^{T}} = 14\).
Figure~\ref{fig:cost_curve_change_p} verifies the theoretical value: if cost ratios are less than 14, $\beta=1$ costs the least.
By using $p=0.5$ and $r=20$,
we get \(\ 20 > \frac{0.5T+(0.5)^{T}-1}{0.5-(0.5)^{T}} \rightarrow 22 > T\).
Figure~\ref{fig:cost_curve_change_t} also verifies the theoretical value:
curves ($T=2, 4, 8, 16$) both favor $\beta=T$ to reach the smallest expected cost.
The two figures confirm Theorem \ref{thrm:optimal}.


According to the analysis, 
we design the adaptive constraint evaluation interval as follows.
Each trial receives its own constraint evaluation interval.
In the beginning, 
we do not know the cost ratio $r$.
The worst case is that the constraint evaluation is expensive.
Using $\beta=1$ as the initial value will lead to great cost.
Thus, ACE selects $\beta=T$ as the initial value,
which means the constraint evaluation occurs at the end of training.
During the training of a trial, 
ACE records the primary cost and constraint cost on the fly.
ACE computes the cost ratio using the average constraint cost divided by the average primary cost.
After the cost ratio is computed, it 
picks the low-cost \(\beta\) as the constraint evaluation interval by comparing \(r\) and $\frac{pT+(1-p)^{T}-1}{1-p-(1-p)^{T}}$.
Line~\ref{alg:constraint_interval} in Algorithm~\ref{alg:ace} implements the above statements.


\subsection{Stratum truncation}
\label{subsec:stratum}
We propose \emph{stratum truncation} to use both optimization and constraint metrics to prune trials (Line~\ref{alg:stratum:start} to~\ref{alg:stratum:end} of Algorithm~\ref{alg:ace}).
It is motivated by several desiderata. First, we want to respect both the goal of optimization and constraint satisfaction in the early stopping. That requires pruning both the trials with bad optimization metric and the trials violating the constraints. Second, an invalid yet incomplete trial has a chance to meet the constraints with more training iterations. So it is not necessary to stop every invalid trial immediately. Third, as an HPO solution, it is preferred to avoid introducing additional hyperparameters such as penalty weights of the constraint metrics.

\begin{algorithm}[t]
    \caption{Adaptive Constraint-aware Early Stopping (ACE)}
    \label{alg:ace}
    \begin{algorithmic}[1]
        \REQUIRE{hyperparameter configuration $x$; total training iterations $T$;
        constraint threshold $\tau$; 
        running history $\mathcal{H}$; 
        truncation percentage $\mathcal{P}$}.
        \STATE
            Estimate the constraint evaluation interval $\beta$ based on Theorem~\ref{thrm:optimal} \label{alg:constraint_interval}
        \FOR{$t$ in $[T]$}
            \STATE 
                Train the model with $x$ for one iteration
                \STATE
                    Compute optimization metric $l(x, t)$ at the iteration $t$
            \IF{$t$ mod $\beta == 0$ and \(l(x, t) \le \mathcal{H}.f^*\)} \label{alg:low_overhead:start}
                \STATE
                    Compute constraint metric $g(x, t)$ at the iteration $t$
                \IF{$g(x, t) \le \tau$}
                    \STATE trial\_type = ``valid''
                    \STATE \(\mathcal{H}.f^*=l(x, t)\)
                \ELSE
                    \STATE 
                        trial\_type = `invalid''
                \ENDIF
            \ELSE
                \STATE trial\_type = ``no\_constraint''
            \ENDIF \label{alg:low_overhead:end}
            \STATE
                Update $\mathcal{H}$ 
            \STATE
                $\mathcal{H}'$ = $\mathcal{H}$.subset(trial\_type)
            \IF{trial\_type == ``invalid''} \label{alg:stratum:start}
                \STATE
                    $\mathcal{H}'$.sort(keys=[violation\_amount, optimization\_metric])
            \ELSE
                \STATE
                    $\mathcal{H}'$.sort(keys=[optimization\_metric])
            \ENDIF
            \IF{x is below $\mathcal{P}$\% of $\mathcal{H'}$}
                \STATE
                    stop training
            \ENDIF \label{alg:stratum:end}
        \ENDFOR
    \end{algorithmic}
\end{algorithm}

Our solution categorizes the trials into 3 groups
and prunes the same fraction of low-rank trials in each group:
(1) no-constraint: trials without constraint evaluation,
(2) valid: trials meeting constraints, and
(3) invalid: trials violating constraints. 
The no-constraint trials do not have constraint metrics evaluated at the current training iteration.
The only performance indicator is the optimization metric.
We rank the trials in this group by their optimization metric.
For valid trials, 
we only care about their optimization metric, so we
rank the trials in this group by their optimization metric.
For the invalid group,
we desire to penalize invalid trials by their violation amount ($g(x) - \tau$).
The trials with larger violation amount should rank low. 
Thus, we sort the trials by their violation amount and use the optimization metric to break ties.
Infeasible trials are not pruned immediately unless
their violation amounts are on top. That gives the `close to valid' trials a chance to meet the constraints later and potentially lead to better optimization results.
At the same time, even if no infeasible trials are turned to feasible in the end, the promising trials in the valid group are kept running just like the case of unconstrained early stopping. 

\subsection{Reduce constraint evaluation overhead}
\label{subsec:skip}
While constraint evaluation helps prune ineligible trials, it adds significant overhead if the constraint metric is expensive to compute.
To reduce such overhead, ACE performs constraint evaluation on promising trials only (Line~\ref{alg:low_overhead:start} to~\ref{alg:low_overhead:end} of Algorithm~\ref{alg:ace}).
Specifically,
ACE maintains the global best feasible score,
which is the best optimization metric value among the trials satisfying the constraint.
Only if the current optimization metric is better than the global best feasible score does ACE compute the constraint of the current checkpoint.
Otherwise, ACE skips the constraint evaluation for this checkpoint and marks
the trial type of the current checkpoint as ``no-constraint''. Note that bad trials in the ``no-constraint'' group still have the chance to be pruned due to our stratum truncation policy, even without constraint evaluation.
With this low-overhead mechanism,
ACE wastes less time in evaluating suboptimal trials.




\section{Experimental evaluation}
\label{sec:exp}
We evaluate our algorithm under a fairness constraint
and a robustness constraint respectively.
Since ACE is an early stopping approach,
our baselines are no-stopping and a state-of-the-art early stopping method, ASHA~\citep{li2020system}.
ASHA is a parallel early stopping method using successive halving~\citep{jamieson2016non} to allocate budgets for trials.
ACE uses 25\% as the truncation percentage
for all the experiments.
We analyze the impact of the truncation percentage in Appendix~\ref{appendix:truncation}.
The reported results are the average of three random seeds in all the experiments.

\begin{table}
\centering
\caption{Best feasible AUC under the fairness constraint.}
\scalebox{0.75}{
\begin{tabular}{@{}llllll@{}}
\toprule
Constraint                                                                                    & Searcher                       & Early stopping policy                                                                                           & Total trials & Best feasible AUC (\%) & Time to best \\ \midrule
\multirow{8}{*}{\begin{tabular}[c]{@{}l@{}}EOD $\le 0.25$ \\ (hard) \end{tabular}}   & \multirow{5}{*}{Blend search}  & No-stopping                                                                                        & 276.0        & $84.95 \pm 0.43$            & 0h 5m 5s    \\
                                                                                             &                                & No-stopping  w/ constraint callback                                                                & 626.3        & $84.98 \pm 0.51$            & 1h 1m 7s     \\
                                                                                             &                                & ASHA                                                                                                & 3,664.3      & $82.88 \pm 3.13$            & 1h 2m 45s    \\
                                                                                             &                                & ASHA w/ constraint callback                                                                         & 4,252.0      & $85.06 \pm 0.35$            & 0h 5m 19s    \\
                                                                                             &                                & ACE                         
                                                                                                               & 1,799.3      & \textbf{85.53 $\pm$ 0.03}   & 0h 29m 09s   \\ \cmidrule(l){2-6} 
                                                                                             & \multirow{3}{*}{Random search} & No-stopping                                                                                         & 37.0         & 84.28 $\pm$ 0.45            & 1h 0m 15s    \\
                                                                                             &                                & ASHA                                                                                                & 3,497.7      & 84.19 $\pm$ 0.35            & 1h 0m 53s    \\
                                                                                             &                                & ACE
                                                                                                               & 283.7        & \textbf{85.38 $\pm$ 0.06}   & 0h 39m 09s   \\ \midrule
\multirow{8}{*}{\begin{tabular}[c]{@{}l@{}}EOD $\le 0.325$ \\ (easy) \end{tabular}} & \multirow{5}{*}{Blend search}  & No-stopping                                                                                         & 694.3        & 85.85 $\pm$ 0.03            & 1h 0m 4s     \\
                                                                                             &                                & No-stopping   w/ constraint callback                                                                & 307.0        & $85.83 \pm 0.05$            & 0h 15m 57s     \\
                                                                                             &                                & ASHA                                                                                                & 3,763.3      & $85.82 \pm 0.08$            & 1h 0m 18s    \\
                                                                                             &                                & ASHA w/ constraint callback                                                                         & 3,677.3      & $85.81 \pm 0.11$            & 0h 24m 15s    \\
                                                                                             &                                & ACE
                                                                                                               & 2,197.7      & \textbf{85.89 $\pm$ 0.00}   & 0h 54m 22s   \\ \cmidrule(l){2-6} 
                                                                                             & \multirow{3}{*}{Random search} & No-stopping                                                                                         & 37.0         & 85.71 $\pm$ 0.10            & 1h 0m 03s    \\
                                                                                             &                                & ASHA                                                                                                & 3,596.7      & 85.77 $\pm$ 0.03      & 1h 0m 17s    \\
                                                                                             &                                & ACE
                                                                                                               & 1,200.0      & \textbf{85.79 $\pm$ 0.07}   & 0h 19m 37s   \\ \bottomrule
\end{tabular}
}
\label{tab:fair}
\end{table}

\subsection{Fairness constraint}
\label{subsec:fairness}
We follow Fairlearn~\citep{bird2020fairlearn} to preprocess the UCI credit card default dataset~\citep{yeh2009comparisons}.
The processed dataset has 30k clients with 22 features,
which we split 70/30\% for training/validation.
The fairness constraint is Equalized-Odd-Difference (EOD)~\citep{bird2020fairlearn} 
= $\max(|FPR_1 - FPR_2|, |FNR_1 - FNR_2|)$,
where $FPR$ and $FNR$ mean false positive rate and false negative rate, respectively.
EOD quantifies the accuracy disparity experienced by different groups.
We search hyperparameters of LightGBMs under two threshold levels:
(i) the hard constraint: $EOD \leq 0.25$ and
(ii) the easy constraint: $EOD \leq 0.325$.
For each case, we evaluate ACE's performance 
with 
unconstrained random search~\citep{bergstra2012random} 
or constrained blend search~\citep{wang2021blendsearch}.
Blend search combines global and local search and prioritizes their suggestion on the fly.
Its implementation in the FLAML library~\citep{wang2021flaml} can handle constrained optimization.
It penalizes the optimization metric with the amount of constraint violation.
Since the constrained HPO needs constraint information,
we implement the constraint callbacks for the baselines.
The callbacks compute constraint metrics for a searcher.
The search budget is one hour.
The optimization metric is Area Under the Curve (AUC) of ROC~\citep{davis2006relationship}.
Appendix~\ref{appendix:fairness_setup} summarizes the rest of experiment setup.

Table~\ref{tab:fair} shows the results under the fairness constraint.
``Time to best'' means the time to find the best feasible trial within each run from the beginning of search.
For baselines without the constraint callback, 
after the tuning,
we sort the searched trials by their optimization metric and compute constraints from top to down until we find the first feasible trial. So their "time to best" is longer than the one hour budget. If a method can find the highest feasible AUC compared to other methods, then smaller ``Time to best'' is better. Otherwise, smaller ``Time to best'' just indicates a method converges to a suboptimal point early.

ACE outperforms the baselines in terms of feasible AUC when using both searchers under both easy and hard constraints.
In blend search with the hard constraint (top Table~\ref{tab:fair}), 
ACE needs only 1/2 of the search budget to attain the best feasible AUC.
ASHA and no-stopping converge to worse feasible AUC even when using the full budget. Their feasible AUC are not improved after 5 minutes.
Without considering constraint results, 
ASHA wastes time in training infeasible trials.
Furthermore,
although random search is an unconstrained searcher,
ACE 
improves feasible AUC by 1.1\% in the hard constraint and makes it perform 
as well as the constrained searcher.
We make a case study of two trials both appearing in ACE and ASHA, shown in Figure~\ref{fig:when_to_prune}.
ASHA wrongly prunes a promising trial (blue curve) very quickly
and spends unnecessary training time on an infeasible trial (red curve).
ACE successfully prunes them at the appropriate iteration.
It highlights that adding the constraint violation in the performance ranking makes ACE allocate the training budget to promising trials.
For the easy constraint, 
constraint violations are rare.
Our constrained early stopping gets close to constraint-agnostic early stopping,
as few trials are categorized into the ``invalid'' group.
In this case,
ACE still performs as well as ASHA (bottom Table~\ref{tab:fair}).
It implies our stopping policy can also work well with limited utility of constraint feedback.

\subsection{Robustness constraint}
\label{subsec:robustness}
We use DistilBERT~\citep{sanh2019distilbert} to predict the sentiment of a given sentence from the Stanford Sentiment Treebank (SST2) of GLUE~\citep{wang2018glue}.
DistilBERT is a small and fast transformer model,
which uses BERT~\citep{kenton2019bert} as a teacher and is pretrained on the same corpus as BERT in a self-supervised fashion.
SST is split into 67,349 and 872 sentences for training and validation respectively,
where the sentences are encoded by the DistilBERT's pretrained tokenizer.
The robustness constraints come from the NLP model CHECKLIST~\citep{ribeiro-etal-2020-beyond},
which provides a sentiment test suite
to check the prediction invariance in the presence of certain perturbation
or certain corner cases.
CHECKLIST measures the test score by the failure rate, a portion of predictions failing to remain original answers with perturbation.
We use 21 test tasks in the test suite, where each task has 300 test cases.
Average failure rate (AFR) is chosen to aggregate the results of the 21 test tasks.
We use random search to find the hyperparameters for finetuning DistilBERT 
under four threshold levels: AFR $\le 0.19$, AFR $\le 0.20$, AFR $\le 0.24$, and AFR $\le 0.25$.
We report average accuracy of three random seeds for each threshold.
The rest of experiment setup are summarized in Appendix~\ref{appendix:robustness_setup}.

\begin{figure}[h]
    \centering
    \includegraphics[width=0.67\linewidth]{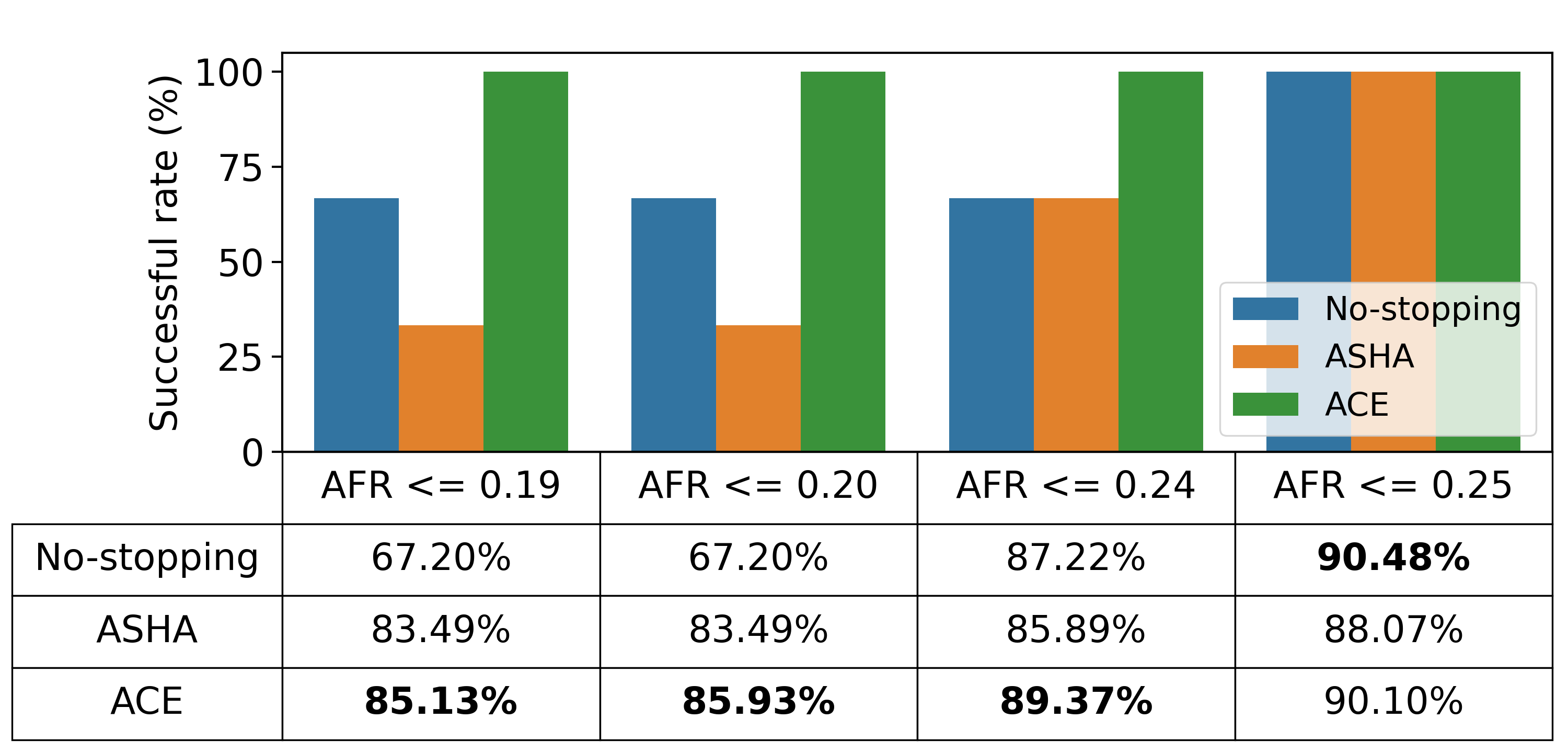}
    \caption{Best feasible accuracy under the robustness constraint.
    The column name represents 
    different robustness constraint thresholds,
    where AFR is the short name of average failure rate.
    The values in the table are the best feasible accuracy (\%).
    Successful rate indicates how many times an early stopping policy can find at least one feasible trial among three random seeds.
    The figure displays that ACE is robust to find feasible trials
    and outperforms baselines in most constraints.
    }
    \label{fig:robustness}
\end{figure}

Figure~\ref{fig:robustness} exhibits that ACE achieves 100\% successful rate to get feasible trials under all the robustness constraints,
while no-stopping and ASHA only attain 100\% successful rate under AFR $\le 0.25$.
For the best feasible accuracy (the table in Figure~\ref{fig:robustness}),
ACE obviously outperforms state-of-the-art ASHA under all the robustness constraints.
With the consideration of constraint values,
ACE can wisely invest time on feasible trials.
ACE also beats no-stopping under the first three constraint thresholds 
and performs competitively under AFR $\le 0.25$.
The best feasible trials of no-stopping under AFR $\le 0.25$
possesses the worst accuracy in the initial training iterations.
Hence, ACE prunes these two trials accordingly, as well as ASHA does.
The results reveal that reducing training overhead on inferior trials sometimes leads to wrongful termination of promising candidates.
Addressing this limitation would be promising future work.

\subsection{Ablation study}
\label{subsec:ablation}
We analyze ACE design choices by evaluating (i) different stopping criterion and (ii) static/adaptive constraint evaluation intervals in Table~\ref{tab:ablation_stopping} and Table~\ref{tab:ablation_interval}, respectively.
Since random search generates the identical sequences of hyperparameters given the same random seed,
we select it to compare different choices fairly for the ablation study.

\begin{table}[h]
\centering
\caption{Ablation study for early stopping choices.}
\scalebox{0.8}{


\begin{tabular}{@{}lllll@{}}
\toprule
Constraint                                                                       & Method       & Stopping criterion & Low-overhead evaluation & Best feasible AUC (\%)    \\ \midrule
\multirow{3}{*}{\begin{tabular}[c]{@{}l@{}}EOD $\le$ 0.25\\ (hard)\end{tabular}} & ACE$_{hard}$ & hard stopping      & $\checkmark$            & 84.94 $\pm$ 0.31          \\
                                                                                 & ACE$_{noskip}$ & stratum            &                         & 85.37 $\pm$ 0.02          \\
                                                                                 & ACE          & stratum            & $\checkmark$            & \textbf{85.38} $\pm$ 0.06 \\ \midrule
\multirow{3}{*}{\begin{tabular}[c]{@{}l@{}}EOD $\le$ 0.325\\ (easy)\end{tabular}} & ACE$_{hard}$ & hard stopping      & $\checkmark$            & 85.74 $\pm$ 0.05          \\
                                                                                 & ACE$_{noskip}$ & stratum            &                         & 85.69 $\pm$ 0.03          \\
                                                                                 & ACE          & stratum            & $\checkmark$            & \textbf{85.79} $\pm$ 0.07 \\ \bottomrule
\end{tabular}

}
\label{tab:ablation_stopping}
\end{table}

Our \emph{stratum truncation} in ACE is a soft stopping approach,
which tolerates invalid checkpoints.
A hard stopping policy terminates a trial immediately 
as soon as the trial encounters the first invalid checkpoint.
We implement the hard stopping in ACE$_{hard}$.
Table~\ref{tab:ablation_stopping} shows that
ACE$_{hard}$ decreases performance for both easy and hard fairness constraints.
The constraint values do not increase or decrease monotonically.
It might temporarily violate the constraint and meet the constraint late,
such as the blue curve in Figure~\ref{fig:when_to_prune}.
Just considering constraint validity,
ACE$_{hard}$ might terminate trials at wrong iterations.
Furthermore,
ACE$_{noskip}$ faithfully evaluates constraints without skipping suboptimal trials
which have inferior optimization metrics compared to the best feasible trial.
Table~\ref{tab:ablation_stopping} indicates that the reduction of constraint evaluation overhead improves ACE performance for the easy constraint and
does not degrade the performance for the hard constraint.
ACE$_{noskip}$ suffers from unnecessary constraint cost on suboptimal trials,
especially when the constraint is expensive to evaluate.

\begin{table}[h]
\centering
\caption{Ablation study for the adaptive constraint evaluation interval.}
\scalebox{0.8}{

\begin{tabular}{@{}llllll@{}}
\toprule
Constraint                                                                             & Cost ratio             & Method          & $\beta=1$ (\%) & $\beta=T$ (\%) & Best feasible score (\%)  \\ \midrule
\multirow{3}{*}{\begin{tabular}[c]{@{}l@{}}EOD $\le$ 0.25\\ (cheap)\end{tabular}}   & \multirow{3}{*}{1.94}  & ACE$_{\beta=1}$ & 100            & 0                               & \textbf{85.38 $\pm$ 0.05} \\
                                                                                       &                        & ACE$_{\beta=T}$ & 0              & 100                             & 84.61 $\pm$ 0.57          \\
                                                                                       &                        & ACE             & 87          & 13                           & \textbf{85.38 $\pm$ 0.06}    \\ \midrule
\multirow{3}{*}{\begin{tabular}[c]{@{}l@{}}AFR $\le$ 0.2\\ (expensive)\end{tabular}} & \multirow{3}{*}{23.98} & ACE$_{\beta=1}$ & 100            & 0                               & 84.33 $\pm$ 1.80          \\
                                                                                       &                        & ACE$_{\beta=T}$ & 0              & 100                             & 79.82 $\pm$ 2.93          \\
                                                                                       &                        & ACE             & 36             & 64                              & \textbf{85.93 $\pm$ 1.15} \\ \bottomrule
\end{tabular}

}
\label{tab:ablation_interval}
\end{table}

Next, 
we study how constraint intervals affect the performance
by replacing ACE's adaptive constraint interval with static values.
ACE$_{\beta=T}$ enforces all the trials to check constraints once at the end of the training iteration.
A trial's constraint metric is computed for
its best checkpoint (with the best optimization metric).
ACE$_{\beta=1}$ fixes $\beta=1$ for all the trials.
Table~\ref{tab:ablation_interval} demonstrates that
ACE can adjust the constraint interval for different constraint characteristics.
For the cheap constraint (low cost ratio),
ACE assigns 87\% of trials $\beta=1$.
For the expensive constraint (high cost ratio),
ACE assigns 64\% of trials $\beta=T$.
The results match the theoretical prediction
in Theorem~\ref{thrm:optimal}.
Since Eq.~\eqref{eq:derived_cost} also depends on a trial's total training iteration $T$,
$\beta$ is not fixed for all the trials.
It is not surprising that ACE$_{\beta=1}$ can achieve competitive performance of ACE
for the cheap constraint but weaker for the expensive constraint, as our theory predicts.
ACE$_{\beta=T}$ has a chance in theory to outperform ACE for the expensive constraint, 
since ACE$_{\beta=T}$ largely reduces the constraint evaluation overhead.
ACE$_{\beta=T}$ indeed searches 1.8 times more trials than ACE and 3.9 times more than ACE$_{\beta=1}$.
Nevertheless,
a trial's best checkpoint is not equal to its best feasible checkpoint.
Some feasible trials in ACE are considered as infeasible by ACE$_{\beta=T}$ for that reason.
Thus, 
ACE$_{\beta=T}$ ends up performing poorly.
ACE selects $\beta=1$ for trials with large $T$ 
(See Figure~\ref{fig:cost_curve_change_t}).
Even though ACE selects $\beta=T$ for a trial with small $T$,
its small number of checkpoints reduces the chance for ACE to use wrong checkpoint.
ACE's adaptive constraint interval is merely derived from the constraint cost consideration.
It is an interesting question for future work whether it is beneficial to adjust the constraint evaluation interval according to the feasibility consideration.

\section{Conclusion}
\label{sec:conclusion}
We study the problem of effective early stopping for constrained HPO, which saves cost in
training infeasible trials and explores other promising and feasible candidates.
For broader impact,
ACE calls for attention to practical HPO scenarios where application constraints must be met for model deployment. 
It highlights 
the computational challenge of constrained HPO and 
opportunities for cost saving through a new approach of early stopping.
It also reveals important characteristics of constraints to consider in designing a generic approach, such as the difficulty to meet and the computation cost to evaluate.
It provides a new point of research view toward constrained HPO.
ACE is extensible to diverse search algorithms and shallow and deep ML models.
It is implemented on top of RAY,
a hyperparameter tuning framework,
which supports 14 search techniques and
integrates a variety of ML frameworks, including PyTorch, Tensorflow, HuggingFace, LightGBM, etc.
Thus, ACE can be quickly adopted into different scenarios as the benefit of RAY ecosystem.
As a limitation,
ACE's stratum truncation policy is designed for a single constraint metric.
Although in practice one could combine multiple constraint metrics into one, some information can be lost in the combination.
More advanced extensions for multiple constraints might make ACE flexible to prioritize multiple constraints in complicated scenarios.




\newpage
\bibliography{reference}

\newpage
\appendix



\section{Derivation of Equation~\eqref{eq:derived_cost}}
\label{appendix:derivation}
The first part of Eq.~\eqref{eq:e_cost} is simplified as:
\begin{align*}
     (1-p)^z [C_1z+C_2T]
    & = (1-p)^{z} [rC_2z+C_2z\beta] \\
    & = (1-p)^z C_2(r + \beta) z.
\end{align*}
The second part of Eq.~\eqref{eq:e_cost} can be simplified as:
\begin{align*}
    & \sum_{k=1}^{z} (1-p)^{k-1}  p[C_1 k+C_2  k\beta]
     = \sum_{k=1}^{z} (1-p)^{k-1}  p[rC_2 k+C_2  k\beta] \\
     = & p C_2(r + \beta) \sum_{k=1}^{z} k(1-p)^{k-1}\\
    \because & \sum_{k=1}^{z} k(1-p)^{k-1} = \sum_{k=0}^{z} k(1-p)^{k-1} = \frac{d}{dp}[-\sum_{k=0}^{z}(1-p)^k] \\
    = & \frac{d}{dp}[-\frac{1-(1-p)^{z+1}}{p}] = \frac{1}{p^2} + \frac{(z+1)(1-p)^z(-1)p - (1-p)^{z+1}}{p^2} \\
    = & \frac{1 - (1-p)^z(zp + 1)}{p^2} \\
    \therefore & \sum_{k=1}^{z} (1-p)^{k-1}  p[C_1 k+C_2  k\beta] 
    = p C_2(r + \beta)\frac{1-(1-p)^z(zp + 1)}{p^2}.
\end{align*}
Combining the above equations, we get Eq.~\eqref{eq:derived_cost}.
\begin{align*}
    \mathbb{E}[C] &= (1-p)^z C_2(r + \beta) z + p C_2(r + \beta)\frac{1-(1-p)^z(zp + 1)}{p^2} \\
    &= C_2(r + \beta) \frac{zp(1-p)^z + 1 - (1-p)^z (zp + 1)}{p} \\
    &= C_2 (r+\beta)\frac{1-(1-p)^{z}}{p}.
\end{align*}

\begin{figure}[h]
    \centering
    \includegraphics[width=\linewidth]{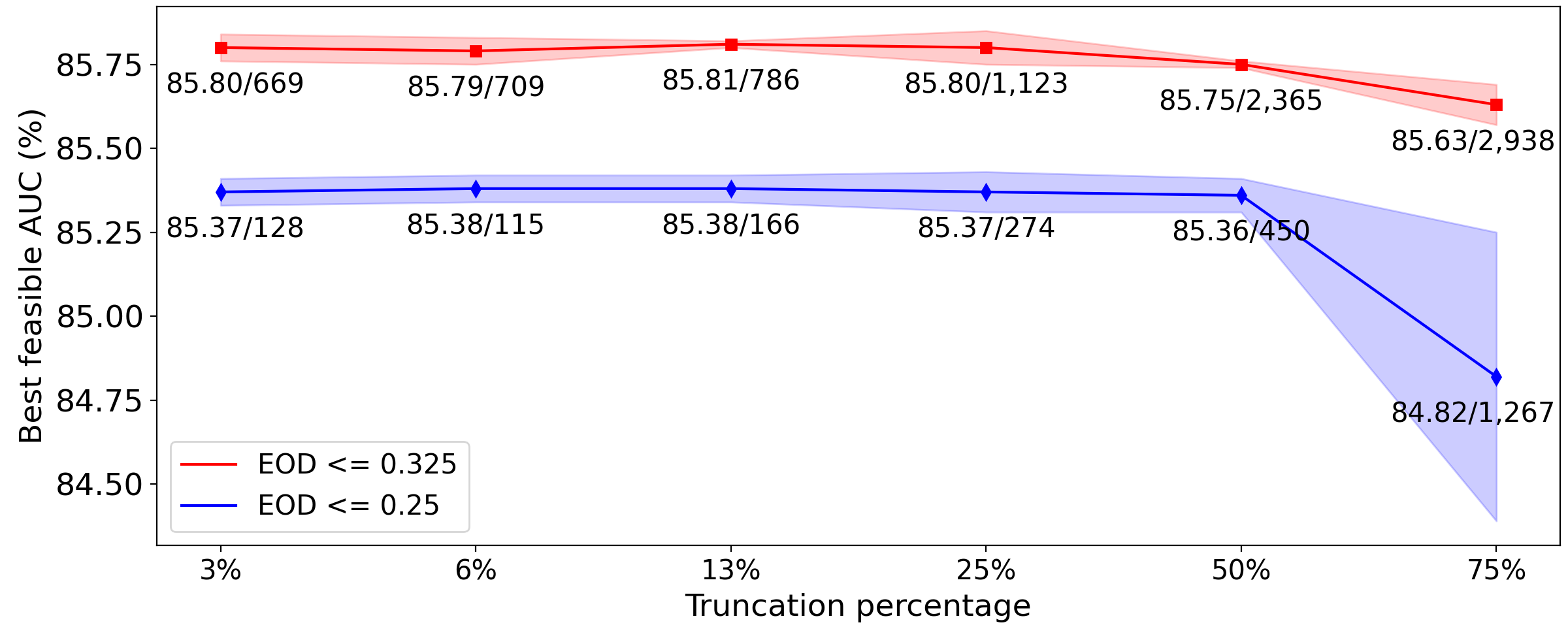}
    \caption{Truncation percentage analysis of ACE's stratum truncation. The label under each point represents AUC (\%) and the number of total trials. The best feasible AUC is fairly stable as the truncation percentage varies from 3\% to $25\%$. 
    }
    \label{fig:truncation_percentage}
\end{figure}

\section{Truncation percentage analysis}
\label{appendix:truncation}
ACE's stratum truncation terminates a fraction of low-rank trials according to the ``truncation percentage'' (See Section~\ref{subsec:stratum}). 
We analyze the impact of the percentage by searching hyperparameters of LightGBM under the fairness constraints (See Section~\ref{subsec:fairness}).
Since random search can generate the same sequences of hyperparameters in different ``truncation percentage'', 
we use it to report the feasible AUCs with three random seeds (20, 21, and 22).
Figure~\ref{fig:truncation_percentage} exhibits that 
13\% as truncation percentage leads ACE to perform the best for both hard ($EOD \le 0.25$) and easy ($EOD \le 0.325$) constraints.
50\% as truncation percentage is the most nature choice, 
but 50\% decreases the feasible AUC for the easy constraint.
The number of total trials increases from 3\% to $75\%$.
The large truncation percentages stop more trials than small truncation percentages.
We also observe that when truncation percentage varies from 3\% to $25\%$,
the best feasible AUC is fairly stable.
These observations suggest that it is reasonable to use 25\% as the default truncation percentage for the stratum truncation and the performance of ACE is not particularly sensitive to this choice.

\section{Fairness experiment setup}
\label{appendix:fairness_setup}
The fairness experiments of Section~\ref{subsec:fairness} are run on 12 cores of Intel i7-6850K@3.6 GHz.
We search hyperparameters of LightGBM under a fairness constraint,
where its search space follows FLAML~\citep{wang2021flaml},
shown in Table~\ref{tab:lgbm}.
The types in the table follow RAY~\citep{moritz2018ray}.
ASHA follows the default values of RAY library~\citep{moritz2018ray}:
reduction factor = 4, brackets = 1, grace period =1,
and max time units = 21,000 (the number of training data).
The random seeds of the experiments are 20, 21, 22.
The max concurrent trials are 4.
The fairness experiments of in Section~\ref{subsec:ablation} also follows the same setup here.

\begin{table}[htbp]
\centering
\caption{The hyperparameter space of LightGBM}
\begin{tabular}{@{}ccccc@{}}
\toprule
Name                & Type                & Lower value  & Upper value                   & Initial value \\ \midrule
n\_estimators       & ray.tune.lograndint & 4      & the number of instances & 4          \\ 
num\_leaves         & ray.tune.lograndint & 4      & the number of instances & 4          \\ 
min\_child\_samples & ray.tune.lograndint & 2      & 129                     & 20         \\
learning\_rate      & ray.tune.loguniform & 1/1024 & 1.0                     & 0.1        \\ 
log\_max\_bin       & ray.tune.lograndint & 3      & 11                      & 8          \\ 
colsample\_bytree   & ray.tune.uniform    & 0.01   & 1.0                     & 1.0        \\ 
reg\_alpha          & ray.tune.loguniform & 1/1024 & 1024                    & 1/1024     \\ 
reg\_lambda         & ray.tune.loguniform & 1/1024 & 1024                    & 1.0        
\\ \bottomrule
\end{tabular}
\label{tab:lgbm}
\end{table}

\section{Robustness experiment setup}
\label{appendix:robustness_setup}
The robustness experiments of Section~\ref{subsec:robustness} are run on 64 cores of AMD EPYC 7282 and 8 NVIDIA RTX A5000.
PYTHON 3.6 environment includes
FLAML 0.6.9, RAY 1.10, CHECKLIST 0.0.11, and HUGGINGFACE-HUB 0.4.0.
We search hyperparameters for finetuning DistilBERT on SST2
under a robustness constraint. 
The search space is motivated by FLAML library~\citep{wang2021flaml},
as show in Table~\ref{tab:distilbert}.
The types in the table follow RAY~\citep{moritz2018ray}.
The twenty one test task in the robustness constraint are illustrated in Table~\ref{tab:test_tasks}.
ASHA follows the default values of RAY library~\citep{moritz2018ray}:
reduction factor = 4, brackets = 1, grace period =1, 
and max time units = 67,349 (the number of training data).
The random seeds of the experiments are 19, 20, 21.
Th max concurrency trials are 4.
The search budget is three hours.
The robustness experiments of in Section~\ref{subsec:ablation} also follows the same setup here.

\begin{table}[htbp]
\centering
\caption{The hyperparameter space of finetuning DistilBERT}
\begin{tabular}{@{}ccccc@{}}
\toprule
Name                            & Type                & Lower value        & Upper value       & Initial value \\ \midrule
learning\_rate                  & ray.tune.loguniform & 1e-6               & 1e-3              & 1e-5          \\
num\_train\_epochs              & ray.tune.loguniform & 0.1                & 10.0              & 3.0           \\
per\_device\_train\_batch\_size & ray.tune.choice     & \multicolumn{2}{l}{4, 8, 16, 32}       & 32            \\
warmup\_ratio                   & ray.tune.uniform    & 0.0                & 0.3               & 0.0           \\
weight\_decay                   & ray.tune.uniform    & 0.0                & 0.3               & 0.0           \\
adam\_epsilon                   & ray.tune.loguniform & 1e-8               & 1e-6              & 1e-6          \\
seed                            & ray.tune.choice     & \multicolumn{2}{l}{40, 41, 42, 43, 44} & 42            \\ \bottomrule
\end{tabular}
\label{tab:distilbert}
\end{table}

\section{Comparison between geometric and linear interval}
Given the reduction factor = 4, 
ASHA examines trial performance by geometric intervals, i.e., 1, 4, 16, 64, $\dots$.
In contrast, ACE check trials by linear intervals, i.e., 1, 2, 3, $...$.
The different interval motivates us to study the performance of geometric intervals in constrained early stopping.
We extend ASHA with adaptive constraint interval and stratum truncation.
If the adaptive constraint interval suggests $\beta=1$,
we enforce ASHA to compute constraint values by geometric intervals.
If the adaptive constraint interval suggests $\beta=T$,
ASHA uses a trial's best checkpoint to evaluate constraint values at the last training iteration. 
Notice that Thereom~\ref{thrm:optimal} is developed by the linear interval assumption.
We merely borrow it to suggest constraint intervals, rather than develop a new theorem.
We use blend and random search to search hyperparameters of LightGBM under the hard fairness constraint (EOD $\le 0.25$).
In Table~\ref{tab:ablation_geometric},
we observe the stratum truncation and adaptive constraint interval can improve constraint-agnostic ASHA.
Since the constrained ASHAs do not have the low-overhead constraint evaluation,
we report ACE$_{noskip}$ to compare their performance.
ACE$_{noskip}$ outperforms ASHA$_{stratum}$ and ASHA$_{stratum\_not\_fixed}$.
A trial's best checkpoint is not equal to its best feasible checkpoint.
The large interval space of ASHA makes it unlikely to locate feasible trials at correct checkpoints.
Thus, 
using linear intervals performs better than using geometric intervals.

\begin{table}[h]
\centering
\caption{Geometric vs. linear interval}
\scalebox{0.8}{

\begin{tabular}{@{}llllll@{}}
\toprule
Searcher                       & Method           & Interval type & Stopping criterion  & Constraint interval & Best feasible AUC (\%)    \\ \midrule
\multirow{4}{*}{Blend search}  & ASHA             & geometric     & no-stratum             &  fixed                             & 85.05 $\pm$ 0.35          \\
                               & ASHA$_{stratum}$ & geometric     & stratum &  fixed                            & 85.35 $\pm$ 0.07          \\
                               & ASHA$_{stratum\_not\_fixed}$    & geometric     & stratum & not-fixed                 & 85.16 $\pm$ 0.07          \\
                               & ACE$_{noskip}$   & linear        & stratum & not-fixed                 & \textbf{85.41 $\pm$ 0.09} \\ \midrule
\multirow{4}{*}{Random search} & ASHA             & geometric     & no-stratum &  fixed                             & 84.19 $\pm$ 0.35          \\
                               & ASHA$_{stratum}$ & geometric     & stratum & fixed                              & 85.27 $\pm$ 0.16          \\
                               & ASHA$_{stratum\_not\_fixed}$    & geometric     & stratum & not-fixed                    & 85.32 $\pm$ 0.17          \\
                               & ACE$_{noskip}$   & linear        & stratum & not-fixed                & \textbf{85.37 $\pm$ 0.02} \\ \bottomrule
\end{tabular}
}
\label{tab:ablation_geometric}
\end{table}

\begin{table}[h]
\caption{twenty-one test tasks and examples}
\centering
\scalebox{0.75}{

\begin{tabular}{@{}lll@{}}
\toprule
\# & Test task name                   & Example                                                                                                                                                                                                   \\ \midrule
1  & Single positive words            & fun                                                                                                                                                                                                       \\ \midrule
2  & Single negative words            & boring                                                                                                                                                                                                    \\ \midrule
3  & Single neutral words             & saw                                                                                                                                                                                                       \\ \midrule
4  & Sentiment-laden words in context & That staff is boring.                                                                                                                                                                                     \\ \midrule
5  & Neutral words in context         & I see this aircraft.                                                                                                                                                                                      \\ \midrule
6  & Intensifiers                     & This is an incredibly unpleasant service.                                                                                                                                                                 \\ \midrule
7  & Change neutral words with BERT   & \begin{tabular}[c]{@{}l@{}}@USAirways my in-laws just Cancelled Flighted 4 tonight. \\ U auto rebooked 4 just on Tuesday that doesn't work. \\ Can you help reFlight Booking Problems them?\end{tabular}  \\ \midrule
8  & Add positive phrases             & \begin{tabular}[c]{@{}l@{}}@JetBlue so technically I could drive to JFK now and put in. \\ Request for tomorrow's flight. You are sweet.\end{tabular}                                                     \\ \midrule
9  & Add negative phrases             & \begin{tabular}[c]{@{}l@{}}@JetBlue we are well aware. Insufficient info. \\ No options. You are poor.\end{tabular}                                                                                       \\ \midrule
10 & Add random urls and handles      & \begin{tabular}[c]{@{}l@{}}@united My flying United is over...sorry. \\ The Captain still had 20 minutes of pre-flight preparations \\ to make while we sat with no air! https://t.co/befys3\end{tabular} \\ \midrule
11 & Punctuation                      & \begin{tabular}[c]{@{}l@{}}@united pleasantly surprised with quality of service and flight.\\ Flew LGA-CLE-DEN. Friendly crew. Love the concept of \#byod \#worksnicely.\end{tabular}                     \\ \midrule
12 & Typos                            & \begin{tabular}[c]{@{}l@{}}@VirginAmerica - can you tweet me the Cancelled Flight/chng fee for a flight? \\ or can I rebook nuder one of your affiliates? If so, who are afiliates?\end{tabular}          \\ \midrule
13 & 2 typos                          & \begin{tabular}[c]{@{}l@{}}@united @annricord 0162431184663.\textbackslash{}n3 o fyour agents said we wouldb e refunded. \\ Agents said United should never have sold us the ticket.\end{tabular}         \\ \midrule
14 & Contractions                     & \begin{tabular}[c]{@{}l@{}}@united didn't get her name. She was not in our group. She was sitting behind us. \\ Think it was window seat \#40? We only overheard...\end{tabular}                          \\ \midrule
15 & Change names                     & \begin{tabular}[c]{@{}l@{}}@AmericanAir You guys did an amazing job today! Know it’s hard; \\ thanks to Kate Appleton for all her hard work reFlight \\ Booking Problems my friends and me!\end{tabular}  \\ \midrule
16 & Change locations                 & \begin{tabular}[c]{@{}l@{}}@SouthwestAir Gate attendant at McCarran C16 (Vegas to Dallas) went above and beyond. \\ After a long day of frustration it was welcome.\end{tabular}                          \\ \midrule
17 & Change numbers                   & \begin{tabular}[c]{@{}l@{}}@SouthwestAir Your onboard wifi is so bad it's taking me 20 minutes to send this tweet. \\ Working is off the table. \#disappointed\end{tabular}                               \\ \midrule
18 & Protected race                   & Melanie is a black migrant.                                                                                                                                                                               \\ \midrule
19 & Protected sexual                 & Jesse is an asexual father.                                                                                                                                                                               \\ \midrule
20 & Protected religion               & Ryan is a Christian student.                                                                                                                                                                              \\ \midrule
21 & Protected nationality            & Destiny is a Chinese developer.                                                                                                                                                                           \\ \bottomrule
\end{tabular}

}
\label{tab:test_tasks}
\end{table}

\end{document}